\renewcommand{\baselinestretch}{1.0}
\newcommand{\blue}[1] {\textcolor{black}{#1}}
\newcommand{\red}[1] {\textcolor{black}{#1}}
\newcommand*\circled[1]{\tikz[baseline=(char.base)]{
            \node[shape=circle,fill,inner sep=1pt] (char) {\textcolor{white}{#1}};}}
\newtheorem{theorem}{Theorem}
\newtheorem{lemma}[theorem]{Lemma}
\begin{document}

\onecolumn
\sysmltitle{TOCO: A Framework for Compressing Neural Network Models Based on Tolerance Analysis}




\begin{sysmlauthorlist}
\sysmlauthor{Soroosh Khoram}{aff1}
\sysmlauthor{Jing Li}{aff1}
\end{sysmlauthorlist}

\sysmlaffiliation{aff1}{Department of Electrical and Computer Engineering, University of Wisconsin-Madison, Madison, WI, USA.}

\sysmlcorrespondingauthor{Jing Li}{jli@ece.wisc.edu}

\sysmlkeywords{Machine Learning, SysML}

\vskip 0.3in

\begin{abstract}


\red{Neural network compression methods have enabled deploying large models on emerging edge devices with little cost, by adapting already-trained models to the constraints of these devices. The rapid development of AI-capable edge devices with limited computation and storage requires streamlined methodologies that can efficiently satisfy the constraints of different devices. In contrast, existing methods often rely on heuristic and manual adjustments to maintain accuracy, support only coarse compression policies, or target specific device constraints that limit their applicability. We address these limitations by proposing the TOlerance-based COmpression (TOCO) framework. TOCO uses an in-depth analysis of the model, to maintain the accuracy, in an active learning system. The results of the analysis are tolerances that can be used to perform compression in a fine-grained manner. Finally, by decoupling compression from the tolerance analysis, TOCO allows flexibility to changes in the hardware.}



\end{abstract}



\printAffiliationsAndNotice{}  

\section{Introduction}
\red{The success of Deep Learning has lead to rapid development of numerous hardware platforms to deploy them in edge computing scenarios \cite{yu2017scalpel,parashar2017scnn}. However, modern Deep Neural Networks (DNNs) often store hundreds of millions of parameters and perform billions of computations for inference. Thus, due to the limited computation and storage available in edge computing systems, models need to be summarized. Neural network compression methods are uniquely suited for this task as they can adapt already-trained models to the constraints of different target hardware without the significant cost of training.}


\red{Previous works have successfully compressed neural network models with strict computation and storage constraints. This is done by optimizing the model with constraints of the hardware by iteratively pruning \cite{han2015deep} or quantizing \cite{zhou2017incremental} the parameters and tuning them. They further usually initialize the optimization with a pre-trained model, to avoid cost of training the model from scratch, and use its parameter values and derivatives to decide which parameters to prune or quantize. While effective, existing methods can have important limitations. They use heuristics that might be inaccurate and affect the accuracy of the compressed model. Further, they apply  encoding policies for compression in coarse granularities which has been shown inefficient \cite{park2018energy}. Finally, they can be tightly coupled to the constraints of a specific hardware, preventing from easy application to a wide range of hardware features and constraints.}


\red{In this work, we propose Tolerance-based Compression (TOCO) which streamlines DNN model compression for deployment on different resource constrained hardware while addressing limitations of existing methods. Figure \ref{fig:overview} shows an overview of TOCO. It uses a committee \circled{1} comprising the uncompressed, pretrained model and the compressed model to identify disagreements between the two over the training set using a Query-By-Committee (QBC) component \circled{3}. Among these, QBC chooses representative ones based on similarity analysis \circled{2}, as they are potentially the most informative. Finally, the selected samples are used in tolerance analysis \circled{4} which produces perturbation bounds on the network parameters. Tolerances are then combined with the hardware constraints to generate the compressed model. Tolerance analysis replaces conventional heuristics and maintains accuracy during compression and allows application of encoding policies in a fine-grained manner. Furthermore, it decouples the compression method from the hardware constraints, allowing adaptation to a wide range of target hardware. We will further elaborate on these components in Section 3.1.}


\begin{figure}[tb]
    \centering
    \includegraphics[width=0.85\linewidth]{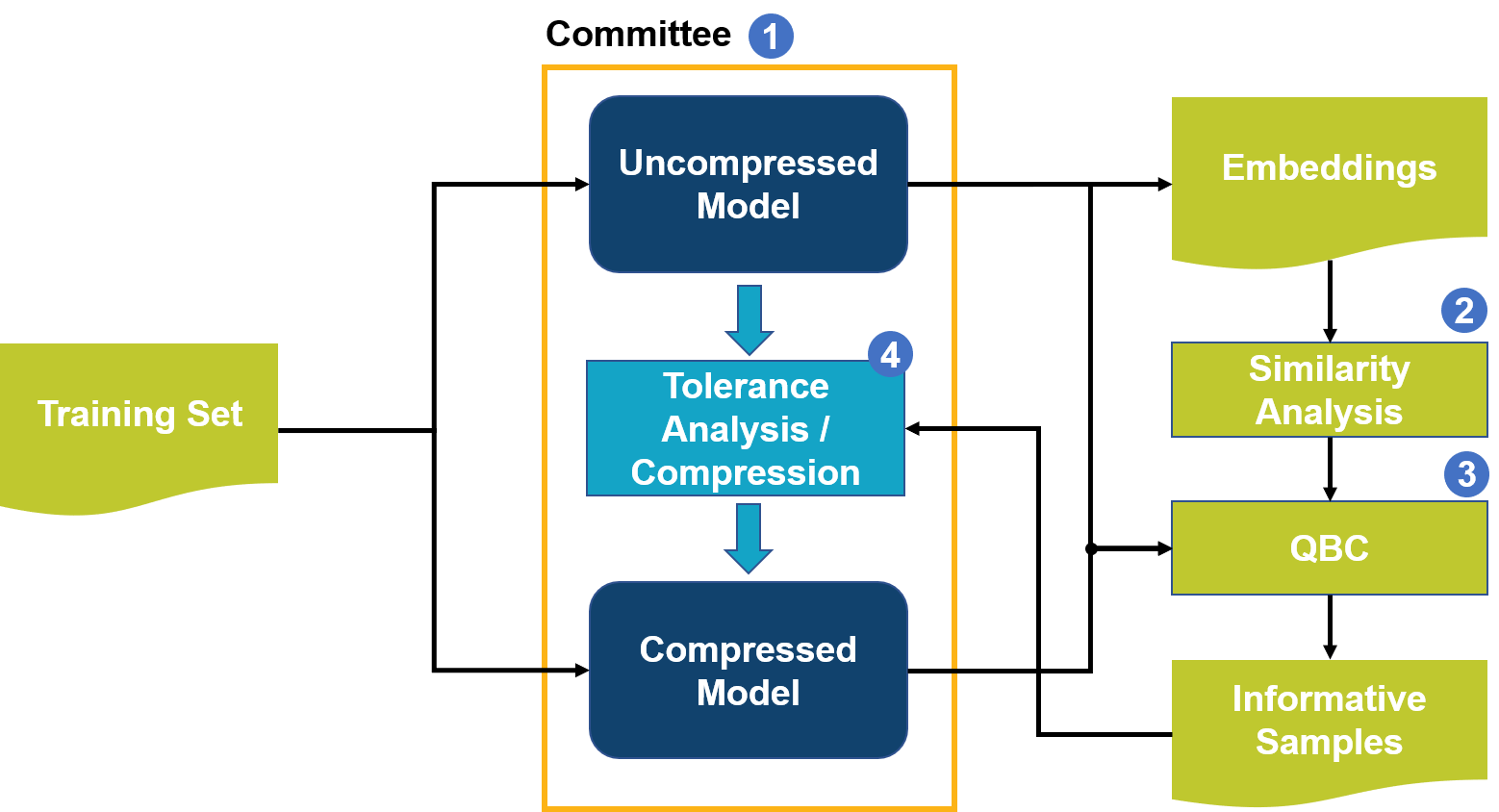}
    \caption{Overview of the main components of TOCO}
    \label{fig:overview}
\end{figure}

We will demonstrate the general applicability of our framework using different compression methods as well as a variety of datasets including CIFAR \cite{cifar} and ImageNet \cite{ILSVRC15}, and neural network models such as ResNet50 \cite{resnet} and VGG16 \cite{simonyan2014very}. In these experiments, we compare the results produced through our proposed framework with several existing compression techniques. We will further compare the speedups achieved using this approach when deploying onto different hardware platforms with an existing heuristic approach under the same accuracy. We will show that the proposed framework outperforms previous heuristic methods through a quick exploration of the solution space, and provides smaller models with faster inference speeds.

\section{Related Works}
Various methods have been proposed in the literature for reducing storage and computation costs of DNNs. Prominent such methods include pruning, quantization, and meta-learning. Pruning eliminates unnecessary network parameters reducing both storage and the number of computations necessary for inference \cite{li2016pruning,molchanov2016pruning}. Quantization uses imprecise, but short encodings to represent parameters \cite{he2016deep,courbariaux2015binaryconnect,hubara2016quantized}. In addition to reducing storage, quantization can reduce the necessary precision of computations, allowing inference on more primitive devices. Meta-learning can be used to learn hyper-parameters of a network, such as the number of filters, to reduce the overall number of parameters and computations \cite{he2018adc,smithson2016neural,he2018amc}. 

Pruning chooses which parameters to prune using on a measure of their importance to accuracy. This measure is often based on heuristics such as their absolute value \cite{he2016deep,yu2017scalpel}. Such heuristics can be inaccurate. For example, removing some small parameters can affect accuracy considerably \cite{molchanov2016pruning}. 

Quantization samples parameters from a small, discrete subset of $\mathbb{R}$. These parameters can then be encoded using only a few bits. However, many quantization methods apply encoding policies in large granularities \cite{hubara2016quantized,zhou2016dorefa}. For example, they encode all parameters in a layer or an entire network using the same number of bits and with the same precision. Such policies have been shown inefficient \cite{park2018energy,khoram2018adaptive} as only a small fraction of parameters usually need to be represented in high precision \cite{park2018energy}.

Meta-learning can be used to balance inference efficiency and accuracy. Such search methods have been shown productive in finding optimal neural network architectures and training hyper-parameters \cite{stanley2002evolving,meyerevolving}. Deep Reinforcement Learning (DRL) has been applied to choose smaller filters \cite{he2018adc} or pruning rates \cite{he2018amc}. These methods implicitly learn the importance of parameters, and less frequently rely on heuristics. But, they also apply encoding policies in large granularities and often do not consider target hardware, assuming GPU by default. This can limit their flexibility in adapting to novel hardware. Further, they can be costly as each actor-critic data point may require complete training and testing of the network, and they still might not be applicable to other methods due to the limitations of DRL.

The proposed framework, called TOCO, automates the process of searching for optimal compressions of a DNN model for a target hardware and addresses the limitations of previous works. Unlike previous automated methods that are mainly applied to pruning targeted for GPU \cite{he2018amc}, TOCO subsumes a wide range of compression techniques as well as hardware. Further, it replaces common heuristics with a quantitative analysis of perturbation bounds for network parameters to decide how aggressively to apply compression without degrading the accuracy. Previously, \citet{lecun1990optimal} proposed a method for computing similar bounds based on the loss Hessian which is still used in some practical applications \cite{theis2018faster}. However, their technique is expensive since it requires computation of second derivatives, is applicable only to pruning, and also disregards the target hardware. By learning individual perturbation bounds of parameters, TOCO is also able to introduce encoding policies in fine granularities that is a feature absent from previous works. 

\section{Methodology of TOCO}
In this section, we further elaborate on the main components of TOCO from Figure \ref{fig:overview}. We fill first present an overview of TOCO and discuss the role of sample selection in the TOCO design. Then, we present the tolerance analysis / compression component in detail.

\subsection{Overview}
\blue{TOCO computes tolerances for parameters of the network, which are perturbation bounds under a loss constraint, and uses them to compress the model. The tolerance analysis in TOCO (discussed in Section \ref{sec:tolerance}) relies on gradient computation. Accurately computing gradients requires backpropagation for all of the training dataset which can be very expensive. Moreover, assuming the model has been trained or tuned, backpropagation results for many dataset elements may be small and do not meaningfully contribute to the overall gradient. As such, it is advantageous to only select a batch of the informative samples from the dataset for gradient computations. This opens an opportunity to select samples carefully to improve the confidence of the tolerances. To obtain more reliable tolerances, we may choose samples that are closer to the boundary of being misclassified. That is because samples that can be reliably classified correctly are hopefully resistant to perturbations in the model. To select uncertain samples, we can take inspirations from active learning methods.} 

\blue{Active learning explores learning settings where large but unlabeled datasets exists and data sample labels can be obtained only by  querying an oracle, at a cost. The learning algorithm has to query labels from the oracle and update the model accordingly. Naturally, many samples (e.g. samples similar to ones already observed or outliers) do not contribute significant information to the learning task and do not warrant the cost of querying their labels. To minimize the cost of the learning algorithm, the objective of active learning methods is then to identify samples that contribute the most information. }

\blue{From this description, it is easy to incorporate an active learning design in TOCO. In this context, the unknown labels are the gradients and the oracle is backpropagation. Various active learning methods exist to accomplish the task of selecting informative samples given a (partially-trained) model. Most such algorithms rely on computing some measure of uncertainty for the classification of the samples. High uncertainty in a sample means that the model cannot be unambiguously placed in a class by the model. Here, we choose the method called Query By Committee to measure and rank the uncertainty of data samples.}

\blue{Query By Committee (QBC) assumes a committee of models trained on the same classification task, and uncertainty is defined as the degree of disagreement between these models in classifying a data sample. Disagreement can be measured using the Kullback-Leibler (KL) divergence. For a data sample, each member of the committee outputs a posterior distribution for the class assignments. The KL divergence can be used to measure the ``distance'' between these posteriors. Previous works \cite{mccallumzy1998employing} have further used similarity-weighted QBC to identify samples that are not only uncertain, but also representative. Uncertainty measures, when used in isolation can be distracted by outliers. To avoid this issue, we can combine the uncertainty measure (i.e. the KL divergence) with the similarity of data samples to other dataset samples. One measure of similarity used in previous works \cite{mccallumzy1998employing} is $\exp(-D(x, c(x)))$, where x is the dataset element, $c(x)$ is the cluster center for the class to which x belongs, and $D(x,c(x))$ is the distance between $x$ and $c(x)$.}

\blue{Similarity-weighted QBC is used in TOCO to select samples for tolerance computation as Figure \ref{fig:overview} shows. The committee \circled{1} comprises the original uncompressed model and the compressed version of the same model produced by TOCO. Disagreements between these two models are a result of the compression. As such, using uncertain samples identified by this committee can therefore help minimize the effect of compression on accuracy. Before measuring the uncertainties however, we analyze the similarities \circled{2} between data samples to select representative ones. The similarity between samples are measured using the embeddings of the samples extracted from the last layer of the uncompressed model. These similarities can be precomputed before the compression once and reused at each step of the compression when ranking the dataset samples. The similarity analysis results and the KL divergence of the committee are next used to rank the training set by the similarity-weighted QBC \circled{3} which chooses the informative data samples. Finally, these informative samples are used by TOCO \circled{4} to compress the model further.}

\blue{This design has the further advantage of providing more informative gradients when the compressed model is near the original model and gradients are small. While the gradients can be small near an optima obtained during training through backpropagation, as previous works in gradient-based compression have shown \cite{liu2019channel} they can be expressive in identifying unimportant weights. Furthermore, after a few rounds of compression which perturb the model away from the optima, this issue disappears, given we do not perform tuning. The problem may arise again if we are computing gradients on batches of the dataset (\citet{liu2019channel} used all of dataset for gradient computation) and performing tuning. In such a scenario, the tuning resets gradients back to near zero and random batch selection may result in random gradients that are unreliable for importance evaluation. This can be addressed through judicious choice of the batch through the proposed method.}

\blue{Finally, it should be noted that at the beginning of compression where the compressed and uncompressed models are similar, the KL divergence between the models are small. However, as we will discuss in the next section, tolerance analysis in TOCO is an iterative process. Thus, after the first iteration of the algorithm the divergence increases and can be used to identify disagreements.}

\subsection{Tolerance Analysis and Compression}
\label{sec:tolerance}
The key building blocks of TOCO tolerance analysis and compression are depicted in Figure \ref{fig:tolerance}, comprising three stages i.e. compression modeling, hardware-independent tolerance analysis, and hardware-dependent compression. The first stage formulates the complexity of the DNN model as a cost function which compression methods often aim to minimize. This cost informs the necessary perturbations of parameters needed for compression. Any perturbation of the parameters however needs to be made in a fashion that minimizes accuracy loss. To achieve this, in the second stage, TOCO learns bounds for these perturbation that allow for maximal reduction of the complexity cost while minimizing the accuracy loss. These bounds, which we refer to as tolerances, represent the sensitivity of each parameters to perturbation. Low tolerance, i.e. less sensitive, parameters may require high-precision encodings and vice versa.  Before encoding parameters though, in the final stage TOCO makes sure that they are compatible with the capabilities of the hardware to enable efficient storage and computations. As this is the only part of the analysis that depends on the hardware, TOCO decouples compression from the hardware and is easily portable across various hardware platforms. In the rest of this section, we first further explore the relationship between a compression model and the hardware to better illustrate the reasoning behind the composition of TOCO. Then, we present the three components of TOCO in more details.

\begin{figure}[tb]
    \centering
    \includegraphics[width=0.5\textwidth]{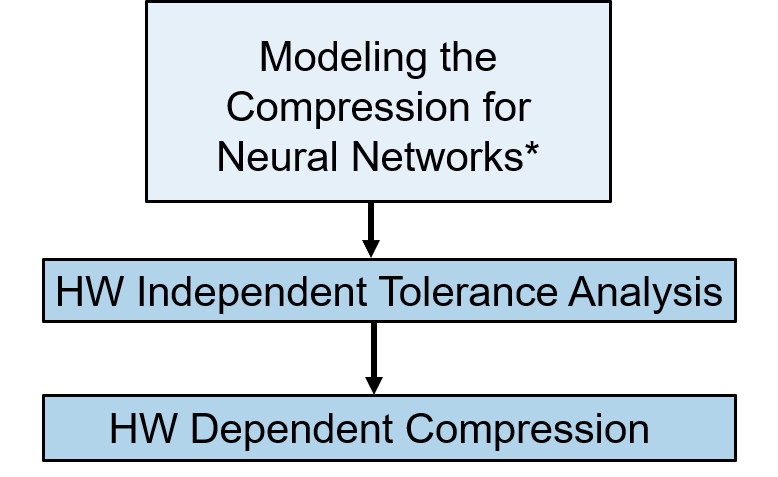}
    \caption{The key components of tolerance analysis in TOCO}
    \label{fig:tolerance}
\end{figure}

\subsubsection{Compression Methods vs. Deployed Hardware}
\label{sec:comp_vs_hw}
Compression encodes parameters of a network in order to reduce its hardware requirements. This is often done by defining and minimizing a measure of the complexity of the network like $\Phi(W, \mathcal{C})$. Here, $W$ is the vector of all network parameters and $\mathcal{C}$ is a finite set of values used for encodings. In this optimization, parameters $W$ are updated to cluster around elements of $\mathcal{C}$ while elements of $\mathcal{C}$ can be optimized to maintain accuracy. Finally, each parameter $\omega_i$ is set to its nearest element from $\mathcal{C}$ which corresponds to a unique binary symbol. These symbols are ultimately stored and are referenced during computations of inference.

The hardware characteristics of the compressed network (i.e. storage size, inference time, etc.) are determined by the number of unique symbols and their distribution across the network parameters. In fact, it is critical to hardware efficiency that their distribution follow certain memory access and parallelism rules. However, most existing works optimize only for the number of symbols. Distributions are either ignored or decided before compression. Conversely, TOCO separates minimizing $\Phi$ from the final parameter updates, allowing these distributions being considered.  TOCO learns the bounds to which each parameter can be updated for the compression method. But actual updates are made in conjunction with hardware characteristics. This allows TOCO to take hardware efficiency into account in a way unlike the standalone compression methods \cite{he2016deep}.

\subsubsection{Compression Model}

DNN compression methods minimize the complexity of the model which here we refer to as the compression model. This complexity can be formulated based on various definitions with different applications. Some of the prominent definitions include: overall bit count \cite{khoram2018adaptive}, $L_2$ distance \cite{han2015deep,choi2016towards}, and description length \cite{ullrich2017soft}. Our goal in designing TOCO is to be able to minimize the complexity regardless of how it is defined. This enables flexibly choosing the compression method. In TOCO, this is achieved by defining a generic compression model as below:
\begin{equation}
\label{eq:compmodel}
    \Phi(\mathbf{W}) = \sum_i \phi(\omega_i)
\end{equation}
Here, $\mathbf{W}=[\omega_1, …, \omega_n]^T$ is the vector of all network parameters, and $\Phi(\mathbf{W})$ is the compression model which represents the complexity of the network and is decomposed into the sum of $n$ smooth, convex, and lower bounded functions $\phi(\omega_i)$. This definition is compatible with all aforementioned definitions of complexity.  Finally, without loss of generality, we assume that $\forall i: -\overline{\omega} \leq \omega_i \leq \overline{\omega}$.

\subsubsection{Hardware-Independent Tolerance Analysis}
DNNs assimilate a certain level of redundancy which compression methods can use for reducing complexities. This means perturbations of these parameters after training may only have a small impact on the accuracy. Still perturbations in a direction that reduces complexity are often at odds with maintaining high accuracy. It is imperative that parameters are updated while minimizing the accuracy loss. As we discussed in previous sections however computing these optimal updates does not guarantee efficient implementation on the hardware. Therefore, TOCO computes perturbation bounds instead, independent of the hardware, for the parameters which can be used to compute the updates later while taking the hardware into account later. In this section, we first present computation of these perturbation bounds, which we refer to as tolerances, as an optimization problem and discuss how this problem can be solved.

\textbf{Problem Definition: }
TOCO finds perturbation bounds of the parameters such that the complexity can be minimized while maintaining the accuracy above a threshold. Here, we use the training loss as the proxy for accuracy. In other words, the goal here is to find the tolerance vector $\mathbf{T} = [\tau_1, ..., \tau_n]^T$ such that if the parameters $\mathbf{W}^0$ where to be perturbed by $\mathbf{T}$ the training loss would not drop below $\overline{\ell}$. Furthermore, we need these tolerances to allow the complexity measure $\Phi$ to be minimized. For the latter, we derive a lower bound on the complexity as a function of $\mathbf{T}$ and minimize this bound. Formally thus we can define the minimization problem to compute the tolerances as below:

\begin{gather}
    \label{eq:main_optimization}
    \min_{\mathbf{T}} \Phi(\mathbf{W}^0+\mathbf{T}) \\
    \label{eq:constraint}
    \forall \mathbf{W}~s.t.~|\omega_i-\omega_i^0| \leq |\tau_i|: \mathcal{L}(\mathbf{W}) \leq \overline{\ell}
\end{gather}

Here, $\mathcal{L}(\mathbf{W})$ indicates the training loss with the parameters $\mathbf{W}$. The variable $\mathcal{C}$ has been omitted for brevity as it is not modified here. In section 4, we will discuss how the loss bound, $\overline{\ell}$ is computed. Furthermore, we have assumed $\tau_i$ to be in the descent direction of $\phi(\omega_i^0)$.  Without loss of generality, we assume tolerances are positive and account for the descent directions separately. 
\begin{gather}
    \min_{\mathbf{T}} \Phi(\mathbf{W}^0+\mathbf{A}\circ\mathbf{T})=\sum_i\phi(\omega_i^0+\alpha_i\tau_i)\\
    \mathbf{A}=[\alpha_1, ...\alpha_n]^T\in\{\pm1\}^n
\end{gather}
Finally, we assume that if there exists a minima in the descent direction of $\phi(\omega_i^0+\alpha_i\tau_i)$, say $\omega_i^*$, then $\tau_i\leq|\omega_i^*-\omega_i^0|$. We note that by this definition, $\Phi(\mathbf{W}^0+\mathbf{A}\circ\mathbf{T})$ is a lower bound for $\Phi(\mathbf{W}^0+\mathbf{A}\circ\mathbf{\Delta W})$ when $0\leq \Delta \omega_i\leq \tau_i$. Consequently, the constraint of equation \ref{eq:constraint} can be written as:
\begin{equation}
    \label{eq:constraint2}
    \mathcal{L}(\mathbf{W}^0+\mathbf{A}\circ\mathbf{\Delta W}) \leq \overline{\ell}
\end{equation}
\textbf{Solution: }As a first step to solve this problem we need to simplify the condition of equation \ref{eq:constraint2}. First the loss function $\mathcal{L}$ is normally too complex and can impose expensive computations to solve this problem. Second this constraint is not directly defined over $\mathbf{T}$. We then use the Taylor expansion as a local estimation of the loss to address its complexity. We will discuss how the accuracy of this estimation is controlled in section \ref{sec:application}. We address the second issue by finding a bound over the loss as a function of $\mathbf{T}$. TOCO uses use the first component of Taylor expansion to estimate $\mathcal{L}$.
\begin{gather}
    \mathcal{L}(\mathbf{W}^0+\mathbf{A}\circ\mathbf{\Delta W})-\mathcal{L}(\mathbf{W}^0) \approx\\ \nabla_{\mathbf{W}}\mathcal{L}(\mathbf{W}^0)^T\mathbf{\Delta W} \leq \mathbf{G}^T \mathbf{T}
\end{gather}
Here, $\mathbf{G}=[g_1, ..., g_n]^T$ where $g_i=|[\nabla_{\mathbf{W}}\mathcal{L}(\mathbf{W}^0)]_i|$. Thus, we can guarantee the constraint (assuming that the linear estimation is sufficiently accurate) if:
\begin{equation}
    \mathbf{G}^T \mathbf{T} \leq \overline{\ell} - \mathcal{L}(\mathbf{W}^0) = \Delta\overline{\ell}
\end{equation}
This simplifies the original optimization problem as below:
\begin{gather}
    \min_{\mathbf{T}}\hat{\Phi}(\mathbf{T})=\Phi(\mathbf{W}^0+\mathbf{A}\circ\mathbf{T}) \\
    \mathbf{G}^T \mathbf{T} \leq \Delta\overline{\ell}
\end{gather}
We can solve this by writing its KKT conditions.
\begin{gather}
    \nabla_{\mathbf{T}}\hat{\Phi}+\lambda G = 0\\
    \lambda(\mathbf{G}^TT-\Delta\overline{\ell}) = 0\\
    \lambda \geq 0
\end{gather}
We can solve this system of equations to find $\lambda$ and $\mathbf{T}$. While without additional knowledge about $\phi$, we cannot further simplify this problem, we can solve it in linear time. We assume that we intend to solve this system by an error of $\epsilon_\lambda$ and $\epsilon_\mathbf{T}$ for $\lambda$ and $\mathbf{T}$, respectively. Then, we can find a solution through binary search in $O(n\log\frac{1}{\epsilon_\lambda}\log\frac{1}{\epsilon_\tau})$ time. However, many special cases of $\phi$ exist where we can find closed-form solutions for the system and compute $\mathbf{T}$ in $O(n)$ time. We have included the general solution and the special case analyses in the supplementary material. 

\subsubsection{Hardware-Dependent Compression}
At this stage, TOCO compresses the model according to the tolerances computed in the previous section and the characteristics of the target hardware. First, parameters are updated in order to minimize $\Phi$ without violating the tolerance constraints derived in the previous section. TOCO also constrains optimization of parameters to the set $\mathcal{C}$ and encodes them by assigning them symbols, as described in section \ref{sec:comp_vs_hw}. Then, the encodings are tuned by modifying the assignments of some parameters to ensure efficient deployment on the target hardware.

\textbf{Optimization: } Here, parameters are first updated to minimize $\Phi$ under the tolerance constraints. Since $\Phi$ is decomposable, this optimization is simplified to:
\begin{gather}
    \min_{\omega_i\in\mathcal{C}} \phi(\omega_i)\\
    |\omega_i-\omega_i^0|\leq\tau_i
\end{gather}
Since $\phi(\omega_i)$ are optimized independently for all $i$, this optimization can be solved quickly. TOCO sorts $\mathcal{C}$ and starts from the nearest element to $\omega_i^0$, testing elements of $\mathcal{C}$ in the descent direction of $\phi$. When $\phi$ is no longer reduced or elements are further than $\tau_i$ from $\omega_i^0$, the algorithm stops. The value of the parameter is set to the solution of the optimization and its corresponding symbol is assigned to it.

\textbf{Tuning the encodings: }TOCO needs to encode parameters without violating the tolerances such that the network can be efficiently deployed to the hardware. Efficient encoding, as discussed previously, is achieved through enforcing a set of rules informed by the limitations of the hardware. These rules are usually defined as a set of groupings of parameters often based on proximity in the layer matrices \cite{yu2017scalpel,li2016pruning}. The size of groups has to be compatible with the memory access sizes or SIMD width of the hardware. These are, respectively, the number of parameters that can be read from the memory and computed by the target hardware. For example, group size on GPUs is the largest possible to utilize their massive parallelism, that is, equal to the size of a filter in convolution layers. Conversely on a microcontroller with two computation cores, it might be efficient to choose the group size of $2$ to improve the parallelism for this hardware \cite{yu2017scalpel}.

To account for the hardware then, parameters in a group are encoded similarly. This often means that they are encoded with symbols of the same size or have a similar precision. If the previous step encoded some parameters in a group with small symbols or high-precision, then all parameters in that group are updated with similar encodings, i.e. symbol size or precision. These updates find the nearest element in $\mathcal{C}$ that satisfy this requirement and the tolerance constraint for each parameter. We demonstrate this stage with examples of compression using pruning and layer-wise quantization.

\textbf{Pruning} removes unnecessary parameters. TOCO does this by setting parameters like $\omega_i$ that satisfy the pruning condition, that is $|\omega_i|\leq\tau_i$, to zero. For the sake of hardware efficiency, TOCO needs to decide whether to prune for a group of parameters instead of one-by-one. Here, if all parameters in a group satisfy the pruning condition the group is pruned, as illustrated in Figure \ref{fig:pruning_groups}.

\begin{figure}[ht]
    \centering
    \includegraphics[width=0.6\linewidth]{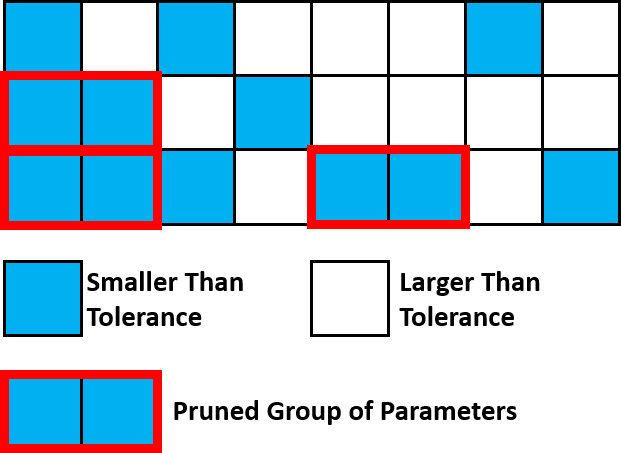}
    \caption{Grouping of parameters in a layer matrix for pruning with a group size of $2$}
    \label{fig:pruning_groups}
\end{figure}

\textbf{Layer-wise Quantization} encodes parameters of each layer with symbols with the same number of bits. TOCO thus defines each group as the parameters in a layer. Parameters in each of these groups are assigned symbols of the same length. This length is the longest symbols size assigned to the parameters of the group before tuning.

After these steps, elements of $\mathcal{C}$ may be tuned to enhance accuracy. This operation is case-specific to the compression method, independent from the steps performed by TOCO. 

The three steps described in this section reduce the size of a DNN model such that the resulting model can be efficient on a target hardware. If the resulting model size is not sufficient for the hardware, TOCO may repeat these steps to further reduce the model size. In the next section, we discuss application of TOCO in more detail and discuss how its hyper-parameters are computed.

\section{Application}
\label{sec:application}



In this section, we will discuss the flow of TOCO and its practical aspects in application to compress a DNN model. As mentioned at the end of the previous section, when compressing a model, the three steps of TOCO are iteratively applied to it. In each iteration, TOCO uses a loss bound ($\overline{\ell}$) to limit the accuracy lost during compression. It also uses a loss model, that is an estimation of the loss function, which needs to be accurate in the neighborhood where we solve the optimization of equation \ref{eq:main_optimization}. In the rest of this section, we will present how TOCO selects $\overline{\ell}$ and ensures the accuracy of the estimation of the loss function. 

\textbf{Loss bound: }TOCO receives an uncompressed model like $\mathbf{W}^0$ as input to compress and uses its loss as the initial value for $\overline{\ell}$. In each subsequent iteration, TOCO evaluates the network loss. If the loss is equal to $\overline{\ell}$ up to an error of $\epsilon$, it means the model cannot be compressed further under this constraint. Therefore, it multiplies $\overline{\ell}$ by a factor of $\sigma>1$. This way, TOCO gradually approximates the smallest loss bound that provides sufficient compression.

\textbf{Modeling the loss function: }TOCO uses the Taylor expansion to locally model the loss function. We ensure the accuracy of this model by introducing an upper bound on the tolerances like $\delta$. We detail application of this bound in the supplementary material. Similar to $\overline{\ell}$, TOCO finds $\delta$ by starting from an initial guess and updating it each iteration. If in an iteration, the loss after encoding the parameters is larger than $\overline{\ell}$, TOCO assumes that the loss model was inaccurate and reduces $\delta$. Otherwise it may be increased. Increasing $\delta$ allows faster compression, but we make sure that $\delta$ is never too large (using $\overline{\delta}$) so that the algorithm can roll back quickly after inaccurately modeling loss. 

Here, we construct the flow of TOCO. Details of this flow have been depicted in algorithm \ref{alg:application}. As this algorithm shows, TOCO iteratively computes tolerances and uses the results to encode the parameters into $\mathbf{W}^*$. After encoding, it evaluates the loss. If loss is within the bound $\overline{\ell}$, the new network parameters are recorded and otherwise they are discarded. The algorithm further updates $\delta$ and $\overline{\ell}$ according to the loss. If $\delta$ becomes too small or $\overline{\ell}$ too large the algorithm stops.

\begin{algorithm}[ht]
   \caption{Application of the Framework}
   \label{alg:application}
   \small
   \renewcommand{\baselinestretch}{0.97} 
\begin{algorithmic}[1]
  \STATE Initialize\quad$\overline{\delta}>\delta$,\quad$\overline{\sigma}>\sigma>1$,\quad$k = 0$
   \STATE $\overline{\ell} = \mathcal{L}(\mathbf{W}_0)$
   \WHILE{$\overline{\ell}\leq \overline{\sigma}\mathcal{L}(\mathbf{W}_0)$\quad or\quad$\delta > \epsilon$\quad or\quad$k<K$}
   \STATE $\mathbf{T} = calculate\textunderscore tolerances(\mathbf{W}_{k-1}, \overline{\ell}, \delta)$
   \STATE $\mathbf{W}^*=compress(\mathbf{W}_{k-1}, \mathbf{T}, \mathcal{C})$
   \IF{$\mathcal{L}(\mathbf{W}^*) > \overline{\ell}$}
   \STATE $\mathbf{W}_k=\mathbf{W}_{k-1}$
   \STATE $\delta = 0.5\delta$
   \ELSE
   \STATE $\mathbf{W}_k=\mathbf{W}^*$
   \STATE $\delta = min(2\delta, \overline{\delta})$
   \ENDIF
   \IF{$|\overline{\ell}-\mathcal{L}(\mathbf{W}^*)| < \epsilon$}
   \STATE $\overline{\ell} = \sigma\overline{\ell}$
   \ENDIF
   \STATE $k = k + 1$
   \ENDWHILE
\end{algorithmic}
\end{algorithm}

\section{Experiments}
We demonstrate the general applicability of our framework using a wide range of datasets and network architectures.

\subsection{Datasets}
We use a set of prominent classification dataset to evaluate the proposed approach. Table \ref{tab:datasets} lists these datasets. \blue{Random was generated randomly using the method proposed by Guyon \cite{guyon2003design}. The dataset comprises 100K, 1000-dimensional vectors. It is trained on a single layer network with sigmoid non-linearity to solve a binary classification problem, using Adam with l2 regularization to $95\%$ accuracy.} ImageNet \cite{deng2009imagenet} comprises $224\times224$ photos of everyday items belonging to $1000$ classes. This dataset contains $50000$ images for validation. We will use $10$-fold cross validation for compression and testing. We use the VGG16 \cite{simonyan2014very} network to test this dataset. The trained model has been downloaded from the existing keras trained models \cite{keras}. MNIST \cite{deng2012mnist} is a set of $60000$ hand-written $28\times 28$ images of digits from $10$ classes, which we have trained on LeNet-5 \cite{lecun1998gradient}. CIFAR-10 contains images of objects from real world, belonging to $10$ classes. We have trained both ResNet50 \cite{he2016deep} and a simplified version of VGG16 \cite{hubara2016binarized} for this dataset. We also use this simplified VGG16 for the SVHN dataset \cite{netzer2011reading}. This dataset contains $600000$ images of digits from the real world, with $10$ classes. 

\begin{table}[tb]
\caption{The datasets used in the experiments}
\label{tab:datasets}
\vskip 0.15in
\begin{center}
\begin{small}
\begin{sc}
\begin{tabular}{lccc}
\toprule
Dataset & Network & Accuracy \\
\midrule
Random      & Single-layer & $95\%$ \\
ImageNet    & VGG16&   $92\%$ (top5) \\
MNIST       & LeNet-5& $99.3\%$\\
CIFAR-10    & ResNet50& $92\%$ \\
CIFAR-10    & VGG & $90\%$\\
SVHN     & VGG& $97.5\%$\\
\bottomrule
\end{tabular}
\end{sc}
\end{small}
\end{center}
\vskip -0.1in
\end{table}

\subsection{Results and Discussion}
In this section, we present several experiments to demonstrate the applicability of the proposed framework and its advantages. \blue{First, we verify the effectiveness of the sample selection in TOCO using the Random dataset.} Then, we show the limitations of heuristics in identifying sensitive parameters in compression. We then compare TOCO with a pruning method based on heuristics and show that it can outperform it. We also compare TOCO against aggressive quantization that enforce encoding policies in large granularities. We show that allowing fine-grained definition of encoding policies can result in higher compression rates. We will then evaluate TOCO on adapting to different hardware architectures and show that it can outperform existing methods as well. As TOCO uses gradients for its tolerance analysis, we will compare it against a gradient-based pruning method and show that it performs similarly at lower cost. Finally, we compare TOCO with another automated pruning method based on meta-learning and show that TOCO performs similarly while it is more general and computationally cost effective.

\textbf{Effects of sample selection: }
\blue{This compression methodology based on active learning was tested using the Random dataset. QBC was used to select the top $1\%$ of the dataset. Next batches of 32 randomly selected samples were used to compute tolerances and compress the model. As baseline, we used the same setup without active-learning (TOCO), and used random batches of size 32 from all of the dataset. The results of this experiment have been depicted in Figure \ref{fig:qbc}. We can see the active-learning-based (active TOCO) method consistently outperforms both the baseline and the value-based method. For the rest of this section we will only test active TOCO method.}
 
 \begin{figure}[tb]
     \centering
     \includegraphics[width=0.6\linewidth]{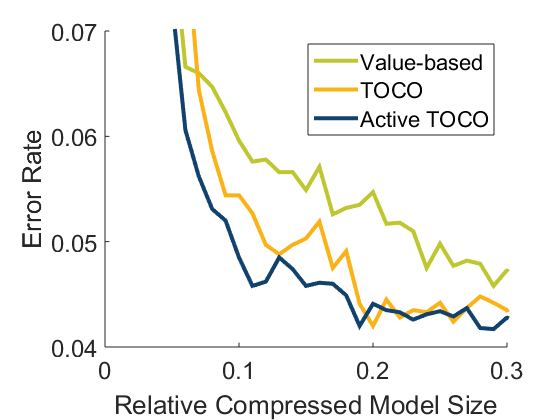}
     \caption{Comparison of TOCO with and without active sample selection and value-based pruning}
     \label{fig:qbc}
 \end{figure}
 
\textbf{Limitations of heuristics: }Many compression methods propose heuristic measures to decide which parameters to perturb when compressing. However, these heuristics can be inaccurate. As an example, here we study the absolute value of parameters as a measure of their importance to maintaining the accuracy which is used by many pruning methods. For this experiment, we compress a trained DNN model using TOCO and fixed-point quantization as the compression method. That is similar to adaptive quantization \cite{khoram2018adaptive}, in each iteration, TOCO reduces the encoding size of parameters until each parameter is quantized with the fewest encoding bits necessary. We further define a measure of importance in this context for parameters:
\begin{align}
    &\mu_{\omega_i} = \frac{K}{\sum_k \rho(\omega_i^k)} \\
    &\rho(\omega_i^k) = 
    \begin{cases}
        1& \phi(\omega_i^k) < \phi(\omega_i^{k-1}) \text{ or } \phi(\omega_i^k)=0 \\
        0& \text{Otherwise}
    \end{cases}
\end{align}
Here, $\mu_{\omega_i}$ measures how fast the iterative process of algorithm \ref{alg:application} minimizes the encoding size of the parameter $\omega_i$. Larger values of $\mu_{\omega_i}$ mean faster elimination of the parameter and correspond to a lower importance. We depict the importance of the parameters in the first fully-connected layer of LeNet-5 versus their initial values at the start of compression, in Figure \ref{fig:importance}. We can see that there is very little correlation between parameter importance and their initial values. Thus, absolute value is not a good measure of the importance of parameters.

\begin{figure}[tb]
\centering
    \includegraphics[width=0.6\linewidth]{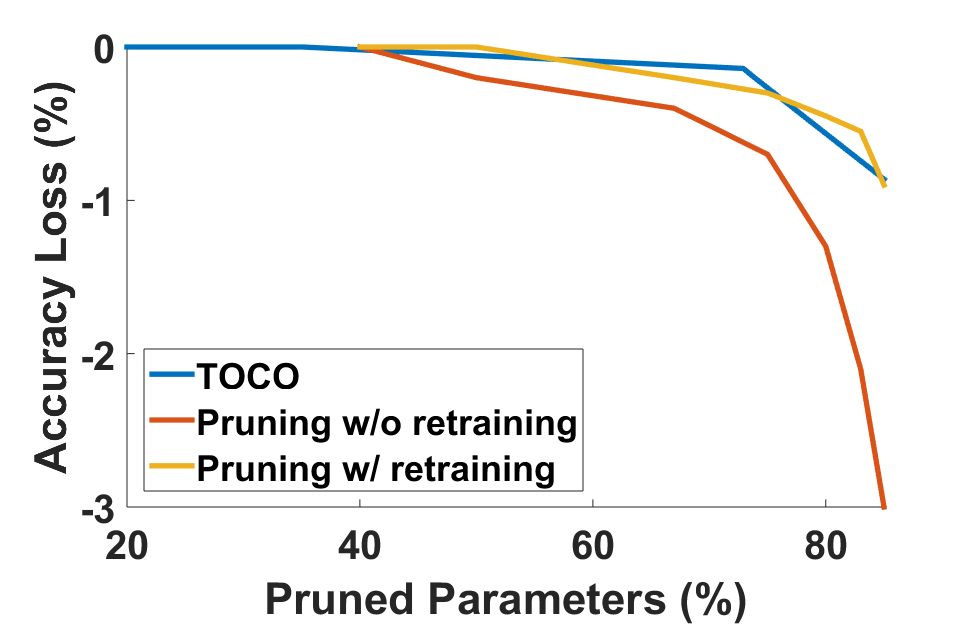}
    \caption{Unconstrained pruning on VGG16 trained on ImageNet}
    \label{fig:pruning}
\end{figure}
\begin{figure*}[ht]
\centering
\subfigure[MNIST]{%
\label{fig:first}%
\includegraphics[width=0.3\linewidth]{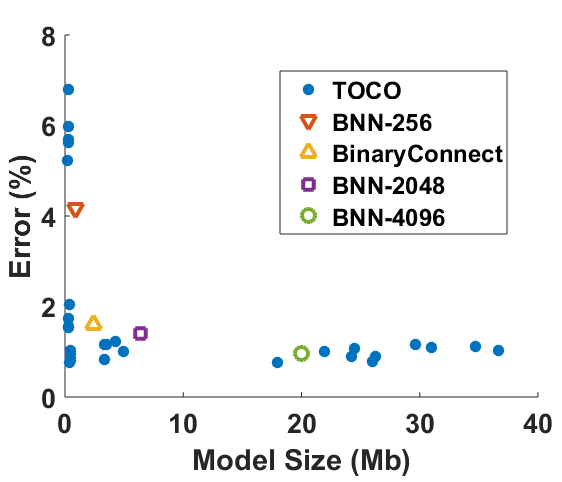}}%
~
\subfigure[CIFAR10]{%
\label{fig:second}%
\includegraphics[width=0.33\linewidth]{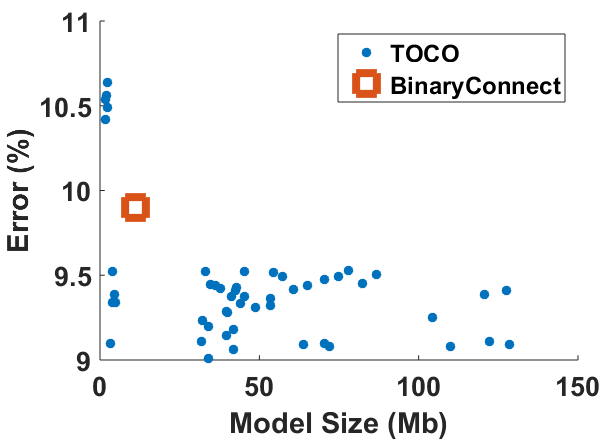}}%
~
\subfigure[SVHN]{%
\label{fig:second}%
\includegraphics[width=0.3\linewidth]{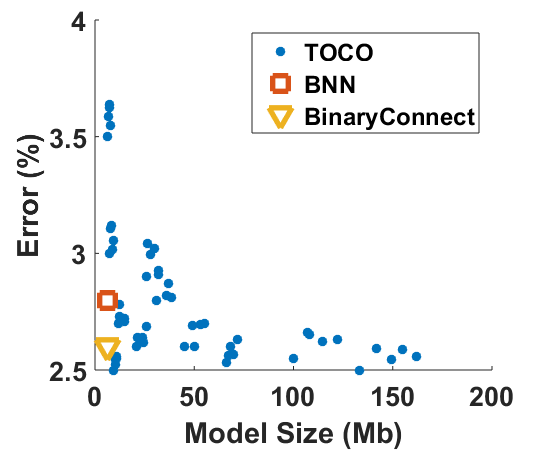}}%

\caption{Comparison of the proposed method in quantization with previous works}

\label{fig:quantization}
\end{figure*}

\begin{figure}[tb]
\centering
    \includegraphics[width=0.6\linewidth]{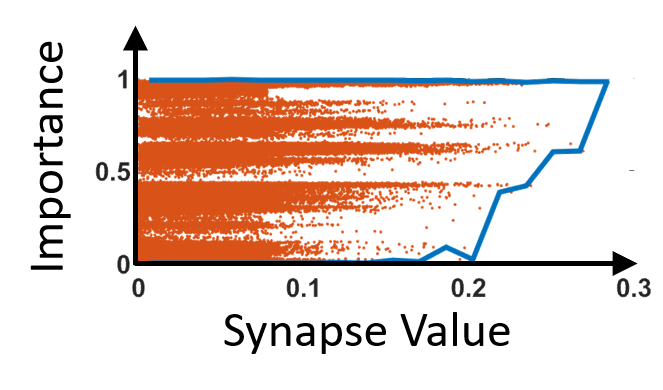}
    \caption{Importance vs. value of parameters}
    \label{fig:importance}
\end{figure}

\textbf{Comparison with heuristic pruning: }We compare the pruning capability of the proposed method with a value-based method \cite{han2015deep} on compressing VGG16 trained on ImageNet. In this experiment, we do not include any hardware constraints and prune parameters solely based on their effect on accuracy, that is group size is set to $1$. We use the tolerances derived in the quantization to determine importance. Following the results of Figure \ref{fig:importance}, we expect the proposed framework to eliminate parameters more effectively and with a smaller loss of accuracy compared to the value-based approach. We examine this by plotting the accuracy loss for the parameter elimination rate in Figure \ref{fig:pruning}.

Figure \ref{fig:pruning} compares the unconstrained pruning performance of the proposed method with the value-based approach in two scenarios, before and after fine tuning. In the first scenario, small parameters are eliminated only and the accuracy is reported. In the second scenario we further compare the results with value-based pruning followed by retraining. We do not retrain for TOCO. As shown, we outperform the first case and perform similar to the second case. We note that the compression is performed in a matter of minutes while retraining on ImageNet may take up to several hours. 

\textbf{Fine-grained quantization: }Next, we apply TOCO to parameter quantization and compare our results with two aggressive quantization methods. Specifically, we compare TOCO with BNN \cite{hubara2016binarized} and BinaryConnect \cite{courbariaux2015binaryconnect} which use one bit to represent parameters. In these experiments, we do not consider any specific hardware, that is, we do not apply the third step of TOCO. This allows us to evaluate the limits of quantization using TOCO. We perform these comparisons for three models: MNIST trained on LeNet-5, and CIFAR10 and SVHN trained on the simplified VGG16 model. We depict the results of these experiments in Figure \ref{fig:quantization}.

In Figure \ref{fig:quantization}, we have depicted the trade-off between size and accuracy of quantized models in multiple passes of the compression algorithm. Here, we start from the models in Table \ref{tab:datasets} and apply the quantization method. Then, we retrain the quantized model in floating-point domain and apply the quantization to the retrained model. We repeat this process three times and depict the different data points we find during this process in the Figure. As we can see all generated data points exist in a narrow band near the pareto-optimal front of the compression. This further emphasizes the efficiency of the proposed method specially when retraining is costly. We can also see that this model presents a lower bound for aggressive compression techniques. As such it may be used as an approximate a baseline to evaluate the efficiency of other methods.

\textbf{Evaluation over different hardware: }A key characteristic of TOCO is how it can adapt to different target hardwares. To demonstrate this, we use it to prune LeNet trained on MNIST for implementation on GPU (GTX Titan X), CPU (Intel Core i7-6700), and microcontroller (ARM Cortex-M4) and compare with a value-based method tailored to hardware implementation called Scalpel \cite{yu2017scalpel}. We follow Scalpel when grouping parameters for each target hardware. For the GPU, Scalpel groups parameters such that all layer matrices are dense after pruning. For the CPU, the same is done for the convolution layers, and a group size of $8$ is used for the fully connected layers to utilize the $8$ cores of the CPU. In the case of the Microcontroller, we use a group size of $2$ for all layers. We have compared the results of the proposed compression method with Scalpel, in Figure \ref{fig:scalpel}.

\begin{figure}[tb]
\centering
\subfigure[Compression rate]{%
\label{fig:first}%
\includegraphics[width=0.4\linewidth]{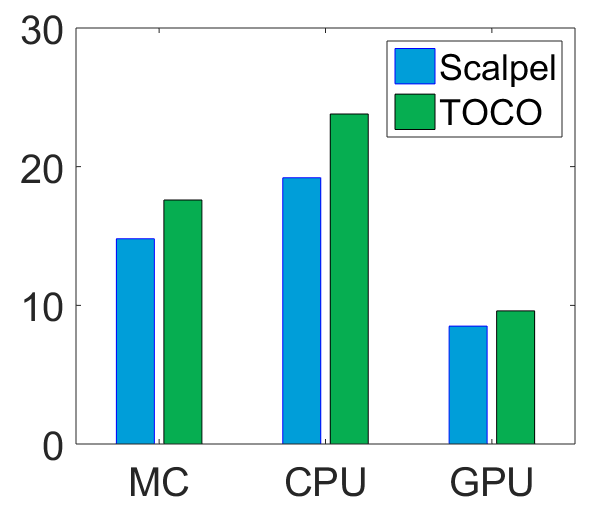}}%
~
\subfigure[Speedup]{%
\label{fig:second}%
\includegraphics[width=0.37\linewidth]{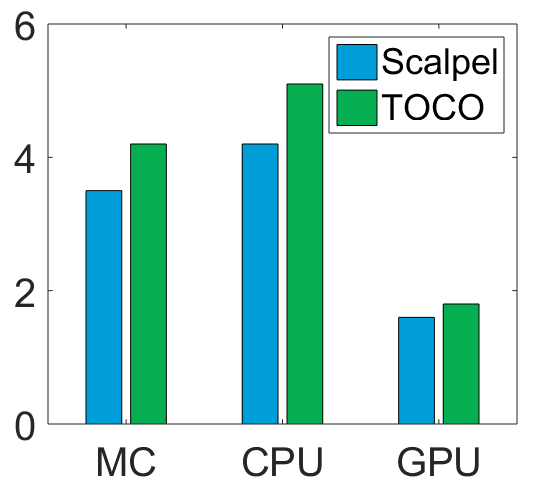}}%
\caption{Comparison of the proposed method with Scalpel on hardware implementation normalized to uncompressed model.}
\label{fig:scalpel}
\end{figure}
Figure \ref{fig:scalpel} depicts the compression rate and speedup of TOCO with Scalpel for the same classification accuracy. We achieve a higher compression rate compared to scalpel taking into account all storage overheads of sparse models. Using these compressed models, we are able to achieve higher speedups on each of the hardware platforms. The speedups have been interpolated using profiles of sparse matrix multiplication on different hardware produced  in previous work by \citet{yu2017scalpel}. 

We further visualize the first fully connected layer after pruning using the two pruning methods for the case of microcontroller in Figure \ref{fig:visual}. It is evident from this Figure that TOCO achieves a higher pruning while maintaining the parallelism required by the hardware. This result is achieved since TOCO can better account for the importance of parameters. We confirm this by visualizing the importance values and the initial parameter values of that layer in Figure \ref{fig:importance_scalpel}.
\begin{figure}[tb]
\centering
\subfigure[TOCO]{%
\label{fig:first}%
\includegraphics[width=0.4\linewidth]{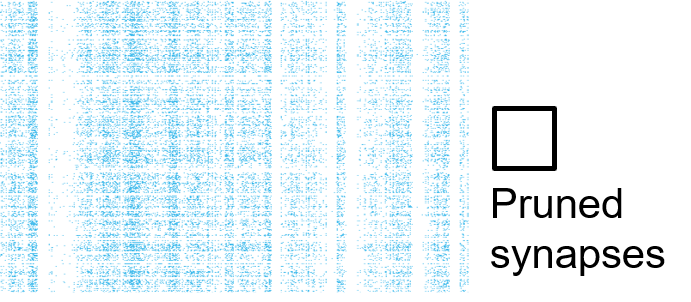}}%
~
\subfigure[Scalpel]{%
\label{fig:second}%
\includegraphics[width=0.4\linewidth]{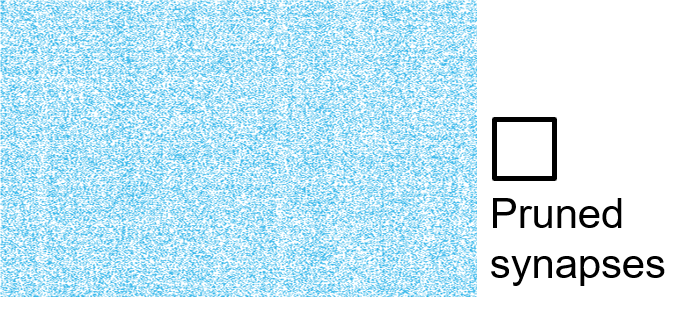}}%
\caption{Visualization of a pruned layer using the proposed method and a value-based approach}
\label{fig:visual}
\end{figure}

\begin{figure}[tb]
\centering
\subfigure[Parameter importance]{%
\label{fig:first}%
\includegraphics[width=0.4\linewidth]{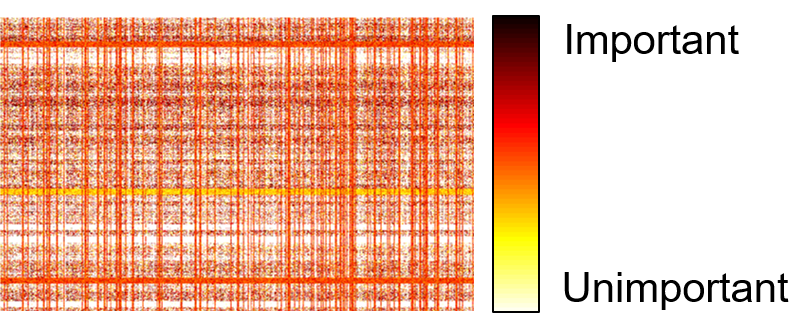}}%
~
\subfigure[Initial values]{%
\label{fig:second}%
\includegraphics[width=0.4\linewidth]{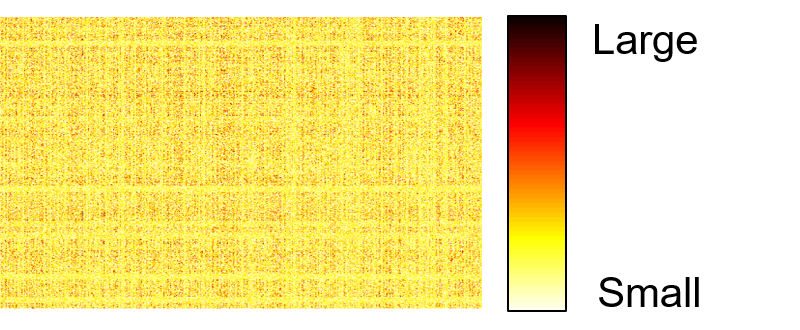}}%
\caption{Importance and parameter values of the first FC layer in LeNet-5}
\label{fig:importance_scalpel}
\end{figure}

\textbf{Comparison with previous gradient methods: }As an example of existing gradient methods we use Optimal Brain Damage (OBD) \cite{lecun1990optimal} which originally introduced the idea of pruning based on the Hessian and is still being applied in practice \cite{theis2018faster}. This method uses the second order gradient to compute a measure of saliency for parameters and prunes low-saliency ones. While this method can be effective, it has the higher cost of the second order gradient. Due to the high computation cost, we had perform this comparison on a smaller model. We have used a two-layer MLP with $16000$ parameters. Figure \ref{fig:obd} shows the error rate of this network for different pruning rates using TOCO and OBD. As this Figure shows, TOCO and OBD perform generally similarly, with TOCO outperforming OBD when fewer parameters remain.

\begin{figure}[tb]
\centering
    \includegraphics[width=0.4\linewidth]{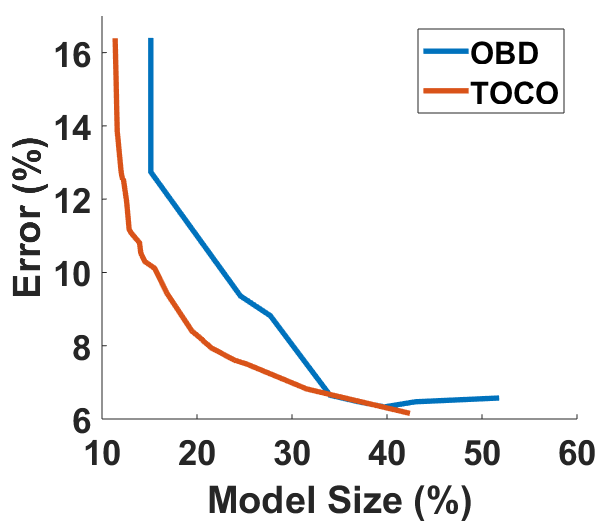}
    \caption{Pruning using TOCO and OBD}
    \label{fig:obd}
\end{figure}

\textbf{Comparison with automated frameworks: }Finally, we compare the proposed approach with an existing automated method called AutoML for Model Compression (AMC) \cite{he2018amc}. Unlike TOCO, AMC does not account for the importance of individual parameters. But, through Reinforcement Learning (RL), it learns the best compression rate for each layer. The limitation of this approach is that it cannot easily adapt other compression problems which cannot be modeled as fully observable Markov Decision Processes (MDPs) limiting the applicability of RL e.g. greatient-based methods like adaptive quantization. The limits of the applying RL with replay buffers as done in AMC for partially observable MDPs has been demonstrated in previous works \cite{hausknecht2015deep}.


For this experiment we prune the ResNet network trained on CIFAR10 dataset and use the same groupings here as we did for the GPU in the last experiment. As such, both methods produce dense layer matrices that can be efficiently deployed to GPUs. We plot the results of this experiment in Figure \ref{fig:amc}. This Figure shows that the TOCO achieves a similar pruning rate as AMC with a small accuracy loss. We note that AMC requires $1$ hour processing time on a TITAN GPU while TOCO performs the pruning in about a quarter of the time using the closed-form solution presented in the supplementary material. Furthermore, TOCO applies to a wide range of compression methods while AMC performs only pruning.

\begin{figure}[tb]
\centering
    \includegraphics[width=0.5\linewidth]{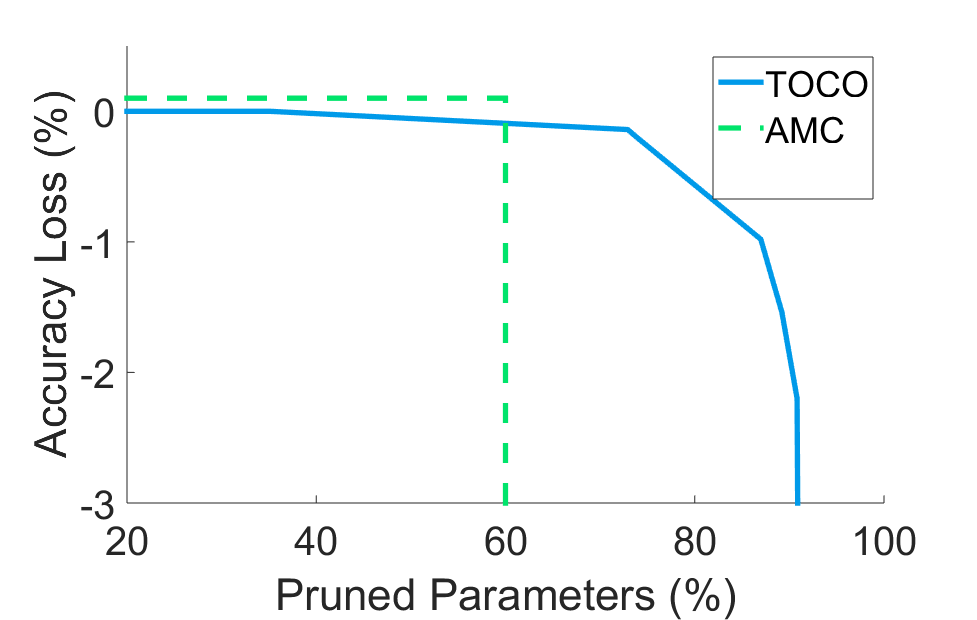}
    \caption{Comparison of the proposed framework with AMC}
    \label{fig:amc}
\end{figure}

\section{Conclusion}
In this work, we presented TOCO, a framework for compressing neural network models for deployment onto various edge computing hardware. In TOCO, compression method is isolated from the hardware through an intermediate tolerance analysis step. As such, it is easily portable across a wide range of target hardware. Furthermore, through its tolerance analysis TOCO learns the individual importances of the parameters. By encoding parameters based on their performance, TOCO is able to maintain accuracy during compression. We showed that TOCO has a wide range of applicability to different compression methods and hardware using comprehensive experiments.


\bibliography{ms}

\begin{thebibliography}{34}
\providecommand{\natexlab}[1]{#1}
\providecommand{\url}[1]{\texttt{#1}}
\expandafter\ifx\csname urlstyle\endcsname\relax
  \providecommand{\doi}[1]{doi: #1}\else
  \providecommand{\doi}{doi: \begingroup \urlstyle{rm}\Url}\fi

\bibitem[Arabi(2004)]{arabi2004low}
Arabi, K.
\newblock Low power design techniques in mobile processors, 2004.

\bibitem[Choi et~al.(2016)Choi, El-Khamy, and Lee]{choi2016towards}
Choi, Y., El-Khamy, M., and Lee, J.
\newblock Towards the limit of network quantization.
\newblock \emph{arXiv preprint arXiv:1612.01543}, 2016.

\bibitem[Courbariaux et~al.(2015)Courbariaux, Bengio, and
  David]{courbariaux2015binaryconnect}
Courbariaux, M., Bengio, Y., and David, J.-P.
\newblock Binaryconnect: Training deep neural networks with binary weights
  during propagations.
\newblock In \emph{Advances in neural information processing systems}, pp.\
  3123--3131, 2015.

\bibitem[Deng et~al.(2009)Deng, Dong, Socher, Li, Li, and
  Fei-Fei]{deng2009imagenet}
Deng, J., Dong, W., Socher, R., Li, L.-J., Li, K., and Fei-Fei, L.
\newblock Imagenet: A large-scale hierarchical image database.
\newblock In \emph{Computer Vision and Pattern Recognition, 2009. CVPR 2009.
  IEEE Conference on}, pp.\  248--255. Ieee, 2009.

\bibitem[Deng(2012)]{deng2012mnist}
Deng, L.
\newblock The mnist database of handwritten digit images for machine learning
  research [best of the web].
\newblock \emph{IEEE Signal Processing Magazine}, 29\penalty0 (6):\penalty0
  141--142, 2012.

\bibitem[Han et~al.(2015)Han, Mao, and Dally]{han2015deep}
Han, S., Mao, H., and Dally, W.~J.
\newblock Deep compression: Compressing deep neural networks with pruning,
  trained quantization and huffman coding.
\newblock \emph{arXiv preprint arXiv:1510.00149}, 2015.

\bibitem[Hausknecht \& Stone(2015)Hausknecht and Stone]{hausknecht2015deep}
Hausknecht, M. and Stone, P.
\newblock Deep recurrent q-learning for partially observable mdps.
\newblock In \emph{2015 AAAI Fall Symposium Series}, 2015.

\bibitem[He et~al.(2016{\natexlab{a}})He, Zhang, Ren, and Sun]{he2016deep}
He, K., Zhang, X., Ren, S., and Sun, J.
\newblock Deep residual learning for image recognition.
\newblock In \emph{Proceedings of the IEEE conference on computer vision and
  pattern recognition}, pp.\  770--778, 2016{\natexlab{a}}.

\bibitem[He et~al.(2016{\natexlab{b}})He, Zhang, Ren, and Sun]{resnet}
He, K., Zhang, X., Ren, S., and Sun, J.
\newblock Deep residual learning for image recognition.
\newblock In \emph{Proceedings of the IEEE conference on computer vision and
  pattern recognition}, pp.\  770--778, 2016{\natexlab{b}}.

\bibitem[He \& Han(2018)He and Han]{he2018adc}
He, Y. and Han, S.
\newblock Adc: Automated deep compression and acceleration with reinforcement
  learning.
\newblock \emph{arXiv preprint arXiv:1802.03494}, 2018.

\bibitem[He et~al.(2018)He, Lin, and Han]{he2018amc}
He, Y., Lin, J., and Han, S.
\newblock Amc: Automated model compression and acceleration with reinforcement
  learning.
\newblock In \emph{Proceedings of the European Conference on Computer Vision
  (ECCV)}, pp.\  784--800, 2018.

\bibitem[Hubara et~al.(2016{\natexlab{a}})Hubara, Courbariaux, Soudry,
  El-Yaniv, and Bengio]{hubara2016binarized}
Hubara, I., Courbariaux, M., Soudry, D., El-Yaniv, R., and Bengio, Y.
\newblock Binarized neural networks.
\newblock In \emph{Advances in neural information processing systems}, pp.\
  4107--4115, 2016{\natexlab{a}}.

\bibitem[Hubara et~al.(2016{\natexlab{b}})Hubara, Courbariaux, Soudry,
  El-Yaniv, and Bengio]{hubara2016quantized}
Hubara, I., Courbariaux, M., Soudry, D., El-Yaniv, R., and Bengio, Y.
\newblock Quantized neural networks: Training neural networks with low
  precision weights and activations, 2016{\natexlab{b}}.

\bibitem[Keras(2018)]{keras}
Keras.
\newblock Keras.
\newblock \url{https://keras.io/}, 2018.
\newblock Accessed: 011-07-2018.

\bibitem[Khoram \& Li(2018)Khoram and Li]{khoram2018adaptive}
Khoram, S. and Li, J.
\newblock Adaptive quantization of neural networks.
\newblock 2018.

\bibitem[Krizhevsky \& Hinton(2009)Krizhevsky and Hinton]{cifar}
Krizhevsky, A. and Hinton, G.
\newblock Learning multiple layers of features from tiny images.
\newblock Technical report, Citeseer, 2009.

\bibitem[LeCun et~al.(1990)LeCun, Denker, and Solla]{lecun1990optimal}
LeCun, Y., Denker, J.~S., and Solla, S.~A.
\newblock Optimal brain damage.
\newblock In \emph{Advances in neural information processing systems}, pp.\
  598--605, 1990.

\bibitem[LeCun et~al.(1998)LeCun, Bottou, Bengio, and
  Haffner]{lecun1998gradient}
LeCun, Y., Bottou, L., Bengio, Y., and Haffner, P.
\newblock Gradient-based learning applied to document recognition.
\newblock \emph{Proceedings of the IEEE}, 86\penalty0 (11):\penalty0
  2278--2324, 1998.

\bibitem[Li et~al.(2016)Li, Kadav, Durdanovic, Samet, and Graf]{li2016pruning}
Li, H., Kadav, A., Durdanovic, I., Samet, H., and Graf, H.~P.
\newblock Pruning filters for efficient convnets.
\newblock \emph{arXiv preprint arXiv:1608.08710}, 2016.

\bibitem[Liu \& Wu(2019)Liu and Wu]{liu2019channel}
Liu, C. and Wu, H.
\newblock Channel pruning based on mean gradient for accelerating convolutional
  neural networks.
\newblock \emph{Signal Processing}, 156:\penalty0 84--91, 2019.

\bibitem[McCallumzy \& Nigamy(1998)McCallumzy and
  Nigamy]{mccallumzy1998employing}
McCallumzy, A.~K. and Nigamy, K.
\newblock Employing em and pool-based active learning for text classification.
\newblock In \emph{Proc. International Conference on Machine Learning (ICML)},
  pp.\  359--367. Citeseer, 1998.

\bibitem[Meyer-Lee et~al.()Meyer-Lee, Uppili, and Zhao]{meyerevolving}
Meyer-Lee, G., Uppili, H., and Zhao, A.~Z.
\newblock Evolving deep neural networks.

\bibitem[Molchanov et~al.(2016)Molchanov, Tyree, Karras, Aila, and
  Kautz]{molchanov2016pruning}
Molchanov, P., Tyree, S., Karras, T., Aila, T., and Kautz, J.
\newblock Pruning convolutional neural networks for resource efficient
  inference.
\newblock \emph{arXiv preprint arXiv:1611.06440}, 2016.

\bibitem[Netzer et~al.(2011)Netzer, Wang, Coates, Bissacco, Wu, and
  Ng]{netzer2011reading}
Netzer, Y., Wang, T., Coates, A., Bissacco, A., Wu, B., and Ng, A.~Y.
\newblock Reading digits in natural images with unsupervised feature learning.
\newblock In \emph{NIPS workshop on deep learning and unsupervised feature
  learning}, volume 2011, pp.\ ~5, 2011.

\bibitem[Parashar et~al.(2017)Parashar, Rhu, Mukkara, Puglielli, Venkatesan,
  Khailany, Emer, Keckler, and Dally]{parashar2017scnn}
Parashar, A., Rhu, M., Mukkara, A., Puglielli, A., Venkatesan, R., Khailany,
  B., Emer, J., Keckler, S.~W., and Dally, W.~J.
\newblock Scnn: An accelerator for compressed-sparse convolutional neural
  networks.
\newblock In \emph{2017 ACM/IEEE 44th Annual International Symposium on
  Computer Architecture (ISCA)}, pp.\  27--40. IEEE, 2017.

\bibitem[Park et~al.(2018)Park, Kim, and Yoo]{park2018energy}
Park, E., Kim, D., and Yoo, S.
\newblock Energy-efficient neural network accelerator based on outlier-aware
  low-precision computation.
\newblock In \emph{2018 ACM/IEEE 45th Annual International Symposium on
  Computer Architecture (ISCA)}, pp.\  688--698. IEEE, 2018.

\bibitem[Russakovsky et~al.(2015)Russakovsky, Deng, Su, Krause, Satheesh, Ma,
  Huang, Karpathy, Khosla, Bernstein, Berg, and Fei-Fei]{ILSVRC15}
Russakovsky, O., Deng, J., Su, H., Krause, J., Satheesh, S., Ma, S., Huang, Z.,
  Karpathy, A., Khosla, A., Bernstein, M., Berg, A.~C., and Fei-Fei, L.
\newblock {ImageNet Large Scale Visual Recognition Challenge}.
\newblock \emph{International Journal of Computer Vision (IJCV)}, 115\penalty0
  (3):\penalty0 211--252, 2015.
\newblock \doi{10.1007/s11263-015-0816-y}.

\bibitem[Simonyan \& Zisserman(2014)Simonyan and Zisserman]{simonyan2014very}
Simonyan, K. and Zisserman, A.
\newblock Very deep convolutional networks for large-scale image recognition.
\newblock \emph{arXiv preprint arXiv:1409.1556}, 2014.

\bibitem[Smithson et~al.(2016)Smithson, Yang, Gross, and
  Meyer]{smithson2016neural}
Smithson, S.~C., Yang, G., Gross, W.~J., and Meyer, B.~H.
\newblock Neural networks designing neural networks: multi-objective
  hyper-parameter optimization.
\newblock In \emph{Proceedings of the 35th International Conference on
  Computer-Aided Design}, pp.\  104. ACM, 2016.

\bibitem[Stanley \& Miikkulainen(2002)Stanley and
  Miikkulainen]{stanley2002evolving}
Stanley, K.~O. and Miikkulainen, R.
\newblock Evolving neural networks through augmenting topologies.
\newblock \emph{Evolutionary computation}, 10\penalty0 (2):\penalty0 99--127,
  2002.

\bibitem[Theis et~al.(2018)Theis, Korshunova, Tejani, and
  Husz{\'a}r]{theis2018faster}
Theis, L., Korshunova, I., Tejani, A., and Husz{\'a}r, F.
\newblock Faster gaze prediction with dense networks and fisher pruning.
\newblock \emph{arXiv preprint arXiv:1801.05787}, 2018.

\bibitem[Ullrich et~al.(2017)Ullrich, Meeds, and Welling]{ullrich2017soft}
Ullrich, K., Meeds, E., and Welling, M.
\newblock Soft weight-sharing for neural network compression.
\newblock \emph{arXiv preprint arXiv:1702.04008}, 2017.

\bibitem[Yu et~al.(2017)Yu, Lukefahr, Palframan, Dasika, Das, and
  Mahlke]{yu2017scalpel}
Yu, J., Lukefahr, A., Palframan, D., Dasika, G., Das, R., and Mahlke, S.
\newblock Scalpel: Customizing dnn pruning to the underlying hardware
  parallelism.
\newblock In \emph{ACM SIGARCH Computer Architecture News}, volume~45, pp.\
  548--560. ACM, 2017.

\bibitem[Zhou et~al.(2017)Zhou, Yao, Guo, Xu, and Chen]{zhou2017incremental}
Zhou, A., Yao, A., Guo, Y., Xu, L., and Chen, Y.
\newblock Incremental network quantization: Towards lossless cnns with
  low-precision weights.
\newblock \emph{CoRR}, abs/1702.03044, 2017.
\newblock URL \url{http://arxiv.org/abs/1702.03044}.

\end{thebibliography}
\bibliographystyle{sysml2019}

\appendix

\section{Supplementary material}
As described in section 3, TOCO solves an optimization problem to compute the tolerances for network parameters. Here, we first present a general solution for this problem based on a binary search. Then, we discuss two cases where we can calculate a closed-form solution. 

\subsection{General Solution}
TOCO optimizes equation 2 by solving the following equation system for $\mathbf{T}$ and $\lambda$. In solving this system, we assume that we can tolerate an error of $\epsilon_\tau$ for the values of elements of $\mathbf{T}$ and an error of$\epsilon_\lambda$ for the value $\lambda$.
\begin{gather}
    \label{eq:tau}
    \nabla_{\mathbf{T}}\phi+\lambda G = 0\\
    \label{eq:lambda}
    \lambda(\mathbf{G}^TT-\Delta\overline{\ell}) = 0\\
    \lambda \geq 0
\end{gather}
TOCO solves this system by guessing a value for $\lambda$, solving equation \ref{eq:tau}, and then refining the guess based on equation \ref{eq:lambda}. We will first discuss the process of guessing and updating $\lambda$ based on the solution of equation \ref{eq:tau}, then we present the details of solving this equation. 

\textbf{Computing $\lambda$: }The process of guessing $\lambda$ and refining it follows a binary search process. As described in algorithm \ref{alg:subproblem}. For this process, we assume that $0\leq\lambda\leq\overline{\lambda}$. In lemma \ref{lemma:lambda}, we prove that $\overline{\lambda}$ exists and compute its value.
\begin{algorithm}[ht]
   \caption{Computing Tolerances}
   \label{alg:subproblem}
\begin{algorithmic}[1]
    \STATE {\bfseries Input:} $\mathbf{G}$, $\overline{\lambda}$, $\epsilon_\lambda$
   \STATE $k=1$
   \STATE $\lambda_0=0$
   \STATE $\lambda_{min}=0$
   \STATE $\lambda_{max}=\overline{\lambda}$
   \REPEAT
   \STATE $\lambda_k=\frac{\lambda_{min}+\lambda_{max}}{2}$
   \STATE Solve $\nabla_\mathbf{T}\phi=-\lambda_k\mathbf{G}$ for $\mathbf{T}$
   \IF{$\mathbf{G}^T\mathbf{T}>\Delta\overline{\ell}$}
   \STATE $\lambda_{max} = \lambda_k$
   \ELSE
   \STATE $\lambda_{min} = \lambda_k$
   \ENDIF
   \STATE $k = k+1$
   \UNTIL{$|\lambda_k-\lambda_{k-1}|<\epsilon_\lambda$}
\end{algorithmic}
\end{algorithm}
\begin{lemma}
\label{lemma:lambda}
Assuming  $\mathbf{G}_\iota=\max(\mathbf{G})$ then:
\begin{equation}
    \lambda \leq \overline{\lambda} = \frac{1}{\mathbf{G}_\iota}\frac{\partial\phi}{\partial\tau} (\frac{\Delta\overline{\ell}}{\lVert\mathbf{G}\rVert_1})
\end{equation}
\end{lemma}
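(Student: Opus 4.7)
The plan is to extract the claimed bound from the first KKT stationarity condition, using primal feasibility together with the convexity of $\phi$ to control the relevant derivative. The core idea is that the stationarity equation pins $\lambda$ to $|\partial_\tau\phi|/g_i$ at every component $i$, so choosing the extremal component --- here the one with the largest gradient $g_\iota=\mathbf{G}_\iota$ --- reduces the task to bounding the magnitude of $\partial_\tau\phi$ at the corresponding optimal tolerance $\tau_\iota$, and then identifying $\tau_\iota$ with the reference value $\tau^*:=\Delta\overline{\ell}/\lVert\mathbf{G}\rVert_1$ that appears inside the formula.

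I would proceed in three steps. First, read off the stationarity condition $\nabla_\mathbf{T}\hat{\Phi}+\lambda\mathbf{G}=\mathbf{0}$ component-wise to obtain $\lambda\, g_i=|\partial_\tau\phi(\omega_i^0+\alpha_i\tau_i)|$ for each active $i$; specialising to $i=\iota$ gives $\lambda\,\mathbf{G}_\iota=|\partial_\tau\phi|_{\tau=\tau_\iota}$. Second, compare $\tau_\iota$ with $\tau^*$. Two structural facts enable this: (i) convexity of $\phi$ combined with the descent-direction assumption forces $|\partial_\tau\phi|$ to be non-increasing in $\tau$; (ii) the common multiplier in the stationarity condition forces a monotone pairing between $g_i$ and $\tau_i$, so that larger $g_i$ necessarily yields smaller $\tau_i$. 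Together these identify $\tau_\iota$ as $\min_i\tau_i$, and the primal feasibility $\sum_j g_j\tau_j\leq\Delta\overline{\ell}$ then yields $\tau_\iota\,\lVert\mathbf{G}\rVert_1\leq\sum_j g_j\tau_j\leq\Delta\overline{\ell}$, i.e. $\tau_\iota\leq\tau^*$. Third, transport this comparison through the monotonicity of $|\partial_\tau\phi|$ and divide by $\mathbf{G}_\iota$ to recover the bound.

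The main obstacle is aligning the direction of the final inequality. Since $|\partial_\tau\phi|$ is non-increasing in $\tau$ and $\tau_\iota\leq\tau^*$, the naive substitution gives $|\partial_\tau\phi(\tau_\iota)|\geq|\partial_\tau\phi(\tau^*)|$, which produces a lower bound on $\lambda$ rather than the required upper bound. Reconciling this will require either (a) a careful sign convention whereby $\partial\phi/\partial\tau$ on the right-hand side is interpreted as an unsigned magnitude along the descent path (needed anyway for $\overline{\lambda}\geq 0$), or (b) swapping to the companion index $m$ achieving $\min_i g_i$ and arguing symmetrically that $\tau_m\geq\tau^*$, which produces a genuine upper bound of the same algebraic shape. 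A secondary subtlety is that the right-hand side of the lemma treats $\phi$ as a function of the scalar $\tau^*$ alone, suppressing dependence on the starting point $\omega_i^0$; this is harmless when $\phi$ admits a common profile across components (as the decomposition $\Phi=\sum_i\phi(\omega_i)$ suggests), but a clean write-up should make that interpretation explicit, and should additionally handle the boundary case in which the upper bound $\tau_i\leq|\omega_i^*-\omega_i^0|$ becomes active for some components, in which case the associated stationarity condition is replaced by an inequality.
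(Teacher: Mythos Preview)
Your three-step plan is exactly the route the paper takes: write $\tau_i=f(-\lambda g_i)$ with $f=(\partial\phi/\partial\tau)^{-1}$, note that $g_\iota=\max_i g_i$ forces $\tau_\iota=\min_i\tau_i$, bound $\tau_\iota\le \Delta\overline{\ell}/\lVert\mathbf{G}\rVert_1$ from the (active) linear constraint, and then invert through $\partial\phi/\partial\tau$.

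The obstacle you flagged is real, and the paper's proof does not resolve it; it conceals it. The proof asserts that $f=(\partial\phi/\partial\tau)^{-1}$ is \emph{decreasing} ``since all $\phi$ are strictly convex,'' but strict convexity makes $\partial\phi/\partial\tau$ increasing and hence $f$ increasing. In the final implication the paper then passes from $-\lambda\,\mathbf{G}_\iota\ge \partial\phi/\partial\tau(\,\cdot\,)$ to $\lambda\le \tfrac{1}{\mathbf{G}_\iota}\,\partial\phi/\partial\tau(\,\cdot\,)$, silently discarding a minus sign. With signs tracked correctly the argument delivers only a \emph{lower} bound on $\lambda$, precisely as you anticipated. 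Your proposed repair (b) --- swapping $\mathbf{G}_\iota=\max\mathbf{G}$ for the index achieving $\min\mathbf{G}$, so that $\tau_m=\max_i\tau_i\ge \Delta\overline{\ell}/\lVert\mathbf{G}\rVert_1$ and the monotonicity runs the right way --- is the clean fix and produces a valid upper bound of the same algebraic shape. As a sanity check against the paper's own Special Case~1 ($\phi(\tau)=-\log\tau$, so $\partial\phi/\partial\tau<0$), the stated $\overline{\lambda}$ is negative and hence cannot upper-bound $\lambda\ge 0$; so the lemma as written is indeed misstated, and your diagnosis is correct.
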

\begin{proof}
Since all $\phi$ are strictly convex, then $f=(\frac{\partial\phi}{\partial\tau})^{-1}$ is decreasing. Therefore, $\tau_\iota=f(-\lambda\mathbf{G}_\iota)=\min(\mathbf{T})$. Consequently, We can rewrite equation \ref{eq:tau}:
\begin{gather*}
    \Delta\overline{\ell} = \sum_i \mathbf{G}_i\tau_i \geq \sum_i\mathbf{G}_i\tau_\iota = \tau_\iota\lVert\mathbf{G}\rVert_1=f(-\lambda\mathbf{G}_\iota)\lVert\mathbf{G}\rVert_1\\
    \Rightarrow -\lambda\mathbf{G}_\iota \geq \frac{\partial\phi}{\partial\tau} (\frac{\Delta\overline{\ell}}{\lVert\mathbf{G}\rVert_1})\Rightarrow \lambda \leq \frac{1}{\mathbf{G}_\iota}\frac{\partial\phi}{\partial\tau} (\frac{\Delta\overline{\ell}}{\lVert\mathbf{G}\rVert_1})
\end{gather*}
\end{proof}
Algorithm \ref{alg:subproblem} sets an upper and a lower bound for $\lambda$. Then, in each iteration, it assumes the midpoint in this range for $\lambda$ and solves equation \ref{eq:tau} for $\mathbf{T}$. If the value of $\mathbf{G}^T\mathbf{T}-\Delta\overline{\ell}$ for the computed $\mathbf{T}$ is positive, it means that $\lambda$ overestimated. Otherwise, it was underestimated. Consequently, either the upper bound or the lower bound of $\lambda$ is updated. These steps are repeated until the convergence of $\lambda$, determined by $\epsilon_\lambda$. As a result of the binary search, algorithm \ref{alg:subproblem} takes at most $O(\log\frac{1}{\epsilon_\lambda})$ iterations.

\textbf{Computing $\mathbf{T}$: }Solving equation \ref{eq:tau} for $\mathbf{T}$ follows a similar approach as $\lambda$. Specifically, it solves the following equation for all $i$ using a binary search.
\begin{equation}
    \frac{\partial}{\partial} \phi(\tau_i)+\lambda g_i = 0
\end{equation}
In this solution, for all $i$ we assume $0\leq\tau_i\leq\overline{\tau}$ and compute $\tau_i$ using algorithm \ref{alg:tau} to an error of $\epsilon_\tau$. Here the value of $\overline{\tau}$ is $\min(\delta, \overline{\omega})$, where $\delta$ refers to algorithm 1 in the main paper.
\begin{algorithm}[ht]
   \caption{Computing $\tau_i$}
   \label{alg:tau}
\begin{algorithmic}[1]
    \STATE {\bfseries Input:} $\mathbf{G}$, $\overline{\tau}$, $\epsilon_\tau$, $\lambda$
   \STATE $k=1$
   \STATE $\tau^0=0$
   \STATE $\tau_{min}=0$
   \STATE $\tau_{max}=\overline{\tau}$
   \REPEAT
   \STATE $\tau^k=\frac{\tau_{min}+\tau_{max}}{2}$
   \IF{$\frac{\partial}{\partial} \phi(\tau_i)+\lambda g_i > 0$}
   \STATE $\tau_{max} = \tau^k$
   \ELSE
   \STATE $\tau_{min} = \tau^k$
   \ENDIF
   \STATE $k = k+1$
   \UNTIL{$|\tau^k-\tau^{k-1}|<\epsilon_\tau$}
\end{algorithmic}
\end{algorithm}
This algorithm also uses a binary search process and thus converges in t most $O(\log\frac{1}{\epsilon_\tau})$. Since this is repeated for all $i$, solving equation \ref{eq:tau} is solved in $O(n\log\frac{1}{\epsilon_\tau})$ time. This results in an overall $O(n\log\frac{1}{\epsilon_\tau}\log\frac{1}{\epsilon_\lambda})$ time or solving the equation system of equations \ref{eq:tau} and \ref{eq:lambda}.

Absent any additional information about $\phi$, this solution is used to quickly compute the tolerances. Next, we discuss cases where $\phi$ is known using two special cases where this solution can be forgone in favor of a closed-form solution.

\subsection{Special Cases}
We study two cases here where the function $\phi$ is known. Specifically, we present quantization as described by \citet{khoram2018adaptive} and \citet{arabi2004low}.

\textbf{Special Case 1. }\citet{khoram2018adaptive} uses the following logarithmic function to construct its measure of complexity for the purpose of fixed-point quantization of parameters:
\begin{equation}
    \phi(\tau_i) = -\log\tau_i
\end{equation}
In this case, it is easy to see the the equation system has a unique solution:
\begin{equation}
    \tau_i=\frac{\Delta\overline{\ell}}{ng_i}
\end{equation}
\textbf{Special Case 2. }Conversely, \citet{arabi2004low} quantizes network parameters using floating-point values and uses a quadratic function to build a measure of complexity.
\begin{equation}
    \phi(\tau_i)=h_{ii}(\omega_i-\mathcal{C}_j^i+\alpha_i\tau_i)^2
\end{equation}
Here, $h_{ii}$ are elements of the hessian of the loss function used to train the network, $\mathcal{C}_j^i$ are elements of the set $\mathcal{C}$, and $\alpha_i$ identify the descent direction of $\phi$. In this case, the following computes the tolerance values.
\begin{gather}
    \tau_i = \frac{g_i}{2h_{ii}}\lambda+\frac{C_j^i-\omega_i}{\alpha_i}\\
    \lambda = \frac{\Delta\overline{\ell}+\sum_i\frac{\omega_i-C_j^i}{\alpha_ig_i}}{\sum_i\frac{g_i^2}{2h_{ii}}}
\end{gather}



\end{document}